\newtheorem{problem}{Problem}
\title{SSMF: Shifting Seasonal Matrix Factorization}
\def\equationautorefname~#1\null{Equation~(#1)\null}
\newtheorem{problem}{Problem}
\newtheorem{lemma}{Lemma}
\newtheorem{proof}{Proof}
\newcommand{\OK}{\checkmark}
\newcommand{\NO}{-}
\newcommand{\argmin}{\mathop{\rm arg~min}\limits}
\newcommand{\Nrow}{m}
\newcommand{\Ncol}{n}
\newcommand{\Npoint}{r}
\newcommand{\Nper}{s}  
\newcommand{\Nstate}{k}
\newcommand{\Nregime}{g}
\newcommand{\Queue}{\mathcal{X}}
\newcommand{\Xt}{\mathbf{X}(t)}
\newcommand{\X}{\mathbf{X}}
\newcommand{\Xhat}{\hat{\mathbf{X}}}
\newcommand{\z}{z}
\newcommand{\U}{\mathbf{U}}
\newcommand{\V}{\mathbf{V}}
\newcommand{\W}{\mathbf{W}}
\newcommand{\Wi}[1]{\mathbf{W}^{(#1)}}  
\newcommand{\Wfull}{\mathcal{W}}
\newcommand{\ub}{\mathbf{u}}  
\newcommand{\vb}{\mathbf{v}}
\newcommand{\ws}{\mathbf{w}}
\newcommand{\CostM}[1]{<#1>}
\newcommand{\CostC}[2]{<#1|#2>}
\newcommand{\CostT}[2]{<#1;#2>}
\newcommand{\cF}{c_F}
\newcommand{\lr}{\alpha}  
\newcommand{\method}{SSMF\xspace}
\newcommand{\smf}{SMF\xspace}
\newcommand{\trmf}{TRMF\xspace}
\newcommand{\ncp}{NCP\xspace}
\newcommand{\regimecast}{RegimeCast\xspace}
\newcommand{\NYtaxi}{NYC-YT\xspace}
\newcommand{\NYtaxiNrow}{265}
\newcommand{\NYtaxiNcol}{265}
\newcommand{\NYtaxiNtime}{4368}
\newcommand{\NYCbikeU}{NYC-CB\xspace}
\newcommand{\NYCbikeUNrow}{409}
\newcommand{\NYCbikeUNcol}{64}
\newcommand{\NYCbikeUNtime}{35785}
\newcommand{\Tycho}{DISEASE\xspace}
\newcommand{\TychoNrow}{57}
\newcommand{\TychoNcol}{36}
\newcommand{\TychoNtime}{3370}
\newcommand{\mypara}[1]{\noindent\textbf{#1}.}
\begin{document}

\maketitle

\begin{abstract}

Given taxi-ride counts information between departure and destination locations, how can we forecast their future demands? In general, given a data stream of events with seasonal patterns that innovate over time, how can we effectively and efficiently forecast future events? In this paper, we propose \textbf{S}hifting \textbf{S}easonal \textbf{M}atrix \textbf{F}actorization approach, namely SSMF, that can adaptively learn multiple seasonal patterns (called regimes), as well as switching between them. Our proposed method has the following properties: (a) it accurately forecasts future events by detecting regime shifts in seasonal patterns as the data stream evolves; (b) it works in an online setting, i.e., processes each observation in constant time and memory; (c) it effectively realizes regime shifts without human intervention by using a lossless data compression scheme. We demonstrate that our algorithm outperforms state-of-the-art baseline methods by accurately forecasting upcoming events on three real-world data streams.

\end{abstract}

\section{Introduction}
\label{sec:intro}

The ubiquity of multi-viewed data has increased the importance of advanced mining algorithms that aim at forecasting future events \citep{Bai_undated-kl,wu2019trend,deng2020dynamic},
such as forecasting taxi-ride counts
based on their departure and destination locations, and time information,
where the events can be represented as a stream of tuples
$(departure, destination, time; count)$.
Infection spread rates during epidemics
can also be represented in a similar form
as $(location, disease, time; count)$,
and the task is to forecast the infection count.
Given such a large data stream produced by user events, how can we effectively forecast future events? How can we extract meaningful patterns to improve forecasting accuracy? Intuitively, the problem we wish to solve is as follows:
\begin{problem}
    Given an event data stream that contains seasonal patterns, forecast the number of future events between each pair of entities in a streaming setting.
\end{problem}

As natural periodicity occurs in several types of data, one of the promising directions to understand the vital dynamics of a data stream is to consider seasonal phenomena.
However, high-dimensional data structures in sparse time-series pose many challenging tasks to uncover such valuable seasonal patterns \citep{chi2012tensors},
while modeling seasonality or periodic patterns has been
a well-studied topic in literature \citep{Takahashi2017-qb}.
Moreover, in a streaming setting, data dynamics vary over time,
and thus the models learned with historical data can be ineffective as we observe new data.
Although online learning schemes, such as \citep{schaul2013no, jothimurugesan2018variance, Yang2018-bo,nimishakavi2018inductive, Lu2019-mz}, provide solutions to overcome this concern, disregarding all past dynamics fails to capture re-occurring patterns in the subsequent processes.
On non-stationary data streams, such approaches no longer converge to the optimal parameters but capture average patterns for several phenomena, which causes less accurate forecasting.
Adequate methods should retain multiple patterns where each pattern is updated incrementally for prevailing tendencies and leveraged for forecasting.

In this paper, we focus on online matrix factorization for real-time forecasting with seasonality. Specifically, we propose a streaming method, Shifting Seasonal Matrix Factorization (\method), which allows the factorization to be aware of multiple smooth time-evolving patterns as well as abrupt innovations in seasonal patterns,
which are retained as \textit{regimes}.
In summary, our contributions are:
\begin{enumerate}
    \item \textbf{Accuracy:}
    Outperforms state-of-the-art baseline methods by accurately forecasting $50$-steps ahead of future events.
    \item \textbf{Scalability:}
    Scales linearly with the number of regimes, and uses constant time and memory for adaptively factorizing data streams.
    \item \textbf{Effectiveness:}
    Finds meaningful seasonal patterns (regimes) throughout the stream and divides the stream into segments based on incrementally selected patterns.  \method uses a lossless data compression scheme to determine the number of regimes. 
\end{enumerate}
\textbf{Reproducibility:}
    Our datasets and source code are publicly available at:
    \url{https://www.github.com/kokikwbt/ssmf}

\section{Related Work}
\label{sec:relatedwork}
\begin{table}[t]
    \centering
    \small
    \caption{Relative Advantages of \method}
    \label{tab:comparison}
    \begin{tabular}{r|cccc|c}\toprule
        & {SVD/NMF++}
        & {\regimecast}
        & {\trmf}
        & {\smf}
        & {\textbf{\method}} \\
        \midrule
        Sparse Data             &\OK    &\NO &\OK &\OK &\OK \\
        Regime Shifts           &\NO    &\OK &\NO &\NO &\OK \\
        Seasonal Pattern        &\NO    &\NO &\OK &\OK &\OK \\
        Drifting components     &some    &\NO &\NO &\OK &\OK \\
        \bottomrule
    \end{tabular}
    \normalsize
\end{table}

\autoref{tab:comparison} summarizes the relative advantages
of \method, compared with existing methods 
for matrix factorization and time series forecasting.
Our proposed method satisfies all the requirements.
The following two subsections provide the details of related work
on matrix/tensor factorization and time series modeling.

\subsection{Matrix Factorization}
The major goal of matrix/tensor factorization,
well known as SVD,
is to obtain lower-dimensional linear representation
that summarizes important patterns \citep{chi2012tensors},
and it has a wide range of applications
\citep{hoang2017gpop,yelundur2019detection,tivsljaric2020spatiotemporal}.
Non-negative constraint provides an ability
to handle sparse data and find interpretable factors
\citep{cichocki2009nonnegative}.
\citep{Takahashi2017-qb} adapted tensor factorization
to extract seasonal patterns in time series
as well as the optimal seasonal period
in an automatic manner.
\citep{yu2016temporal} formulated
the temporal regularized matrix factorization (TRMF),
which is designed for time series prediction.
\citep{Song2017-hs,Najafi2019-uc}
addressed stream tensor completion but
didn't extract seasonal patterns.
Recently, \smf\citep{hooi2019smf} proposed
an online method that incorporates seasonal factors into
the non-negative matrix factorization framework,
and outperforms CPHW \citep{dunlavy2011temporal},
which is an offline matrix factorization method.
However, \smf misses regime shifts
which should be modeled individually.

\subsection{Time Series Analysis}
Statistical models,
such as autoregression (AR) and Kalman filter (KF),
learn temporal patterns for forecasting,
thus a lot of their extensions have been proposed
\citep{cai2015facets,dabrowski2018state}.
\regimecast \citep{Matsubara2016-fx}
focuses on regime detection to obtain compact factors
from data streams effectively
but it does not consider updating detected regimes.
Also, note that they do not consider mining sparse data
in the non-linear dynamical systems,
while handling missing values has a significant role
in time series analysis
\citep{yi2016st,abedin2019sparsesense}.
Hidden Markov models are also powerful tools
for change detection in time series
\citep{
mongillo2008online,%
montanez2015inertial,%
Pierson2018-nq,%
kawabata2019automatic},
but these models represent only short temporal dependency
and thus are ineffective for forecasting tasks.
Detecting regime shifts in non-stationary data
can be categorized in Concept Drift,
and its theory and recent contributions
are well-summarized in \citep{Lu2019-mz}.
In more detail, we consider
incremental drift and reoccurring concepts simultaneously.
\citep{wen2019robuststl}
tried to decompose
seasonality and trends in time series
on the sparse regularization framework
to handle fluctuation and abrupt changes in seasonal patterns,
but this approach does not focus on stream mining
\citep{krempl2014open}.

\section{Proposed Model}
\label{sec:model}

In this section, we propose a model for shifting seasonal matrix factorization.
A tensor we consider consists of a timestamped series of
$(\Nrow\times\Ncol)$ matrices $\X(1),\dots,\X(\Npoint)$,
which can be sparse, until the current time point $\Npoint$.
We incrementally observe a new matrix $\X(\Npoint + 1)$
and $\Npoint$ evolves ($\Npoint=\Npoint + 1$).
Our goal is to forecast $\X(t)$ where $\Npoint < t$
by uncovering important factors in the flow of data,
whose characteristics can change over time.
As we discussed the effectiveness
of handling seasonal patterns, 
we incorporate seasonal factors
into a switching model
that can adaptively recognize recent patterns in forecasting.

\subsection{Modeling Shifting Seasonal Patterns}
\autoref{fig:model} shows an overview of our model.
In matrix factorization,
we assume that each attribute,
e.g., a start station where we have a taxi ride,
has $\Nstate$ latent communities/activities,
which is smaller than $\Nrow$ and $\Ncol$,
e.g., the number of stations.
Considering such latent seasonal activities,
an input data stream is decomposed into the three factors:
$\U$ with shape $(\Nrow\times\Nstate)$ and
$\V$ with shape $(\Ncol\times\Nstate)$
for the row and columns of data $\Xt$, respectively,
and seasonal patterns $\W$ with period $\Nper$,
i.e., $(\Nper\times\Nstate)$,
where $\Nstate$ is the number of components to decompose.
The $i$-th row vector $\ws_i=\{w_1,\dots,w_{\Nstate}\}$
of $\W$ corresponds to the $i$-th season.
Letting $\W_i$ be a $(\Nstate\times\Nstate)$ diagonal matrix
(zeros on non-diagonal entries) with diagonal elements
as $\ws_i$, $\Xt$ is approximated as follows.
%
\begin{align}
    \Xt \approx \U\W_i\V^T.
\end{align}
More importantly, we extend $\W$
by stacking additional $\W$
to capture multiple discrete seasonal patterns
as \textit{regimes}, which is shown as the red tensor
in \autoref{fig:model}.
Letting $\Nregime$ be the number of regimes,
our model employs a seasonality tensor
$\Wfull=\{\W^{(1)},\dots,\W^{(\Nregime)}\}$,
where each slice $\W^{(i)}$ captures
the $i$-th seasonal pattern individually.
By allowing all these components to vary over time,
$\X(t)$ can be represented as follows.
\begin{align}
    \Xt \approx \U(t)\W_i^{(\z)}(t)\V(t)^T,
\end{align}
where $\z\in\{1,\dots,\Nregime\}$
shows an optimal regime index to represent $\X(t)$
in the $i$-th season.
That is, our approach adaptively selects $\ws_{i}^{(z)}$,
as shown in the red-shaded vector in the figure,
with regard to the current season and regime.
%
Finally, the full parameter set we want to estimate is as follows.

\begin{problem}[Shifting seasonal matrix factorization: \method]
\textbf{Given:} a tensor stream $\X(1),$ $\dots,$ $\X(\Npoint)$
that evolves at each time point,
\textbf{Maintain:}
community factors: $\U(t),\V(t)$,
seasonal factors: $\Wfull(t)$,
and the number of regime $\Nregime$ in a streaming fashion.
\end{problem}

\begin{figure}
    \centering
    \includegraphics[width=0.75\linewidth]{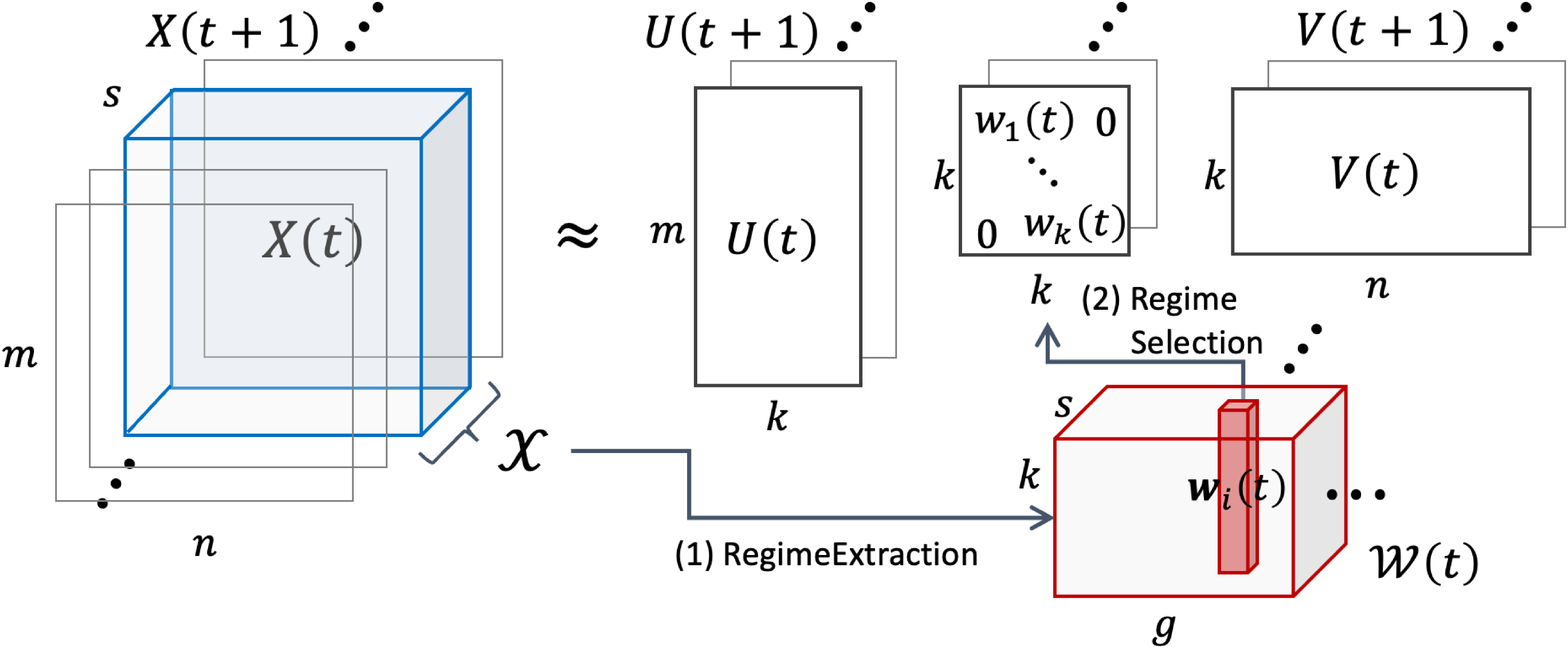}
    \caption{An overview of our model:
    it smoothly updates all the components,
    $\U(t),\V(t)$, and $\Wfull(t)$,
    to continuously describe data stream $\X(t)$.
    If required, (1) increases the number of regimes, $\Nregime$,
    by extracting a new seasonal pattern
    in the last season $\Queue=\{\X(t-s+1),\dots,\X(t)\}$.
    Then, (2) selects the best $\ws_{i}^{(z)}(t)$ in regimes
    to describe $\X(t)$.
    }
    \label{fig:model}
\end{figure}


\subsection{Streaming Lossless Data Compression}
In the \method model, deciding the number of regimes, $\Nregime$,
is a crucial problem for accurate modeling
because it should change as we observe a new pattern to shift.
It is also unreliable to obtain any error thresholds
to decide the time when employing a new regime
before online processing.
%
Instead, we want the model to employ a new regime incrementally
so that it can keep concise yet effective
to summarize patterns.
The minimum description length (MDL) principle \citep{grunwald2007minimum}
realizes this approach, which is defined as follows.
\begin{align}
    \underbrace{
        \CostT{\X}{\U,\V,\Wfull}
    }_{\textrm{Total\ MDL\ cost}}
    \ =\
    \underbrace{
        \CostM{\U} + \CostM{\V} + \CostM{\Wfull}
    }_{\textrm{Model\ description\ cost}}
    + \underbrace{
        \CostC{\X}{\U,\V,\Wfull}
    }_{\textrm{Data\ encoding\ cost}}, 
\end{align}
where $\CostT{\X}{\U,\V,\Wfull}$ is the total MDL cost of $\X$
when given a set of factors, $\U,\V$ and $\Wfull$.

Model description cost, $\CostM{\cdot}$, measures
the complexity of a given model, based on the following equations.
\begin{align}
    \CostM{\U} &= |\U|\cdot(\log(\Nrow)+\log(\Nstate)+\cF),\nonumber\\ 
    \CostM{\V} &= |\V|\cdot(\log(\Ncol)+\log(\Nstate)+\cF),\nonumber\\
    \CostM{\Wfull} &= 
        |\Wfull|\cdot(
            \log(\Nregime)+\log(\Nper)+\log(\Nstate)+\cF
        ),\nonumber
\end{align}
where $|\cdot|$ shows the number of nonzero elements
in a given matrix/tensor, $\cF$ is the float point cost\footnote{%
We used $\cF=32$ bits.}.

Data encoding cost, $\CostC{\X}{\cdot}$,
measures how well a given model compress original data,
for which the Huffman coding scheme \citep{adler01towards}
assigns a number of bits to each element in $\X$.
Letting $\hat x\in\hat\X$ be a reconstructed data,
the data encoding cost is represented by
the negative log-likelihood under a Gaussian distribution
with mean $\mu$ and variance $\sigma^2$ over errors,
as follows.
\begin{align}
    \CostC{\X}{\U,\V,\Wfull}
        = \sum_{x\in\X}-\log_2 P_{\mu,\sigma}(x - \hat x).
\end{align}
While we cannot compute the cost with all historical data
in a streaming setting, a straightforward approach obtains the best number of regimes so that the total description cost is minimized.
To compute the total cost incrementally and individually when provide with new data, we decompose $\CostM{\Wfull}$ approximately into:
\begin{align}
    \CostM{\Wfull}\ \approx
    \sum_{i=1}^{\Nregime}\CostM{\Wi{i}}\ =
    \sum_{i=1}^{\Nregime}|\Wi{i}|\cdot(\log(\Nper)+\log(\Nstate)+\cF).
\end{align}
When a new matrix $\X$ arrives, 
we decide whether the current $\Wfull$
should increase the number $\Nregime$.
by comparing
the total costs for existing regimes
with the total costs for estimated regime.

\begin{align}\label{eqn:costWonline}
    \W_{best}
        &=\argmin_{\W'\in\{\Wi{i},\dots,\Wi{\Nregime+1}\}}
            \CostT{\X}{\U,\V,\W'},\\
        &=\argmin_{\W'\in\{\Wi{i},\dots,\Wi{\Nregime+1}\}}
            \CostM{\U}+\CostM{\V}+\CostM{\W'} + \CostC{\X}{\U,\V,\W'}. \nonumber 
\end{align}
If $\Wi{\Nregime+1}$ gives the minimum total cost of $\X$,
then it belongs to the full $\Wfull$ for subsequent modeling.
In other words, the size of $\Wfull$ is encouraged to be small
unless we find significant changes in seasonal patterns.
%

\section{Proposed Algorithm}
\label{sec:algorithm}

We propose an online algorithm for SSMF
on large data streams containing seasonal patterns. Algorithm~\ref{alg:ssmf} summarizes the overall procedure of our proposed method: it first detects the best regime for the most recent season through a two-step process, namely, RegimeSelection and RegimeExtraction.
Then it smoothly updates the entire factors based on the selected regime with non-negative constraints.
When deciding the number and assignment of regimes, the algorithm evaluates the cost, \autoref{eqn:costWonline}, for the most recent season, to ensure the smoothness of consecutive regime-switching during a season. Thus, the algorithm keeps a sub-tensor $\Queue$,
which consists of the matrices from time point $t-s+1$ to $t$, as a queue while receiving a new $\X(t)$. Given a $\Queue$, the best regime at time point $t$ is obtained as follows.

\begin{itemize}
\item 
RegimeSelection, which aims to choose the best regime $\W_{RS}$
that minimizes \autoref{eqn:costWonline}
in $\{\Wi{1},\dots,\Wi{\Nregime}\}$,
using the two previous factors $\U(t-1)$ and $\V(t-1)$.
We compute the total cost, $C_{RS}$
by using $\U(t-1),\V(t-1)$ and $\W_{RS}$ for $\Queue$.
\item
RegimeExtraction, which obtains a new regime $\W_{RE}$,
focuses on current seasonality, by applying the gradient descent method to
$\U(t-1),\V(t-1)$ and $\W_{RS}$ over the whole $\Queue$.
The initialization with $\W_{RS}$ prevents
$\W_{RE}$ from overfitting to $\Queue$
because temporal data, $\Queue$, can be sparse.
It then computes $C_{RE}$ with $\U(t-1),\V(t-1)$ and $\W_{RE}$ for $\Queue$.
\end{itemize}
%
If $C_{RS}$ is less than $C_{RE}$,
the best regime index $z$ 
becomes one of $\{1,\dots,\Nregime\}$.
Otherwise, $z=\Nregime+1$,
$\Wfull(\Nregime\times\Nper\times\Nstate)$
is extended to
$\Wfull((\Nregime+1)\times\Nper\times\Nstate)$,
and then $\Nregime$ is incremented.

\subsection{Regime-Aware Gradient Descent Method}

Given a new observation $\X(t)$,
we consider obtaining the current factors,
$\U(t),\V(t),\Wfull(t)$
via the previous ones $\U(t-1),\V(t-1),\Wfull(t-1)$
by minimizing error with respect to $\X(t)$.
This allows the model to fit current dynamics smoothly
while considering the past patterns.
Thus, our algorithm performs the gradient descent method
with a small learning rate $\lr>0$.
The gradient step keeps the error with respect to $\Xt$ low.
Letting $\z$ be the current regime index,
and $\hat\X(t)$ be the predicted matrix of $\X(t)$
with the previous parameters:
\begin{align}
    \hat\X(t)=\U(t-1)\W^{(\z)}(t-\Nper)\V(t-1)^T,
\end{align}
the gradient update to $\ub_i(t)$ and $\vb_i(t)$ is given by:
\begin{align}
    \ub_i(t)\!&\leftarrow\!\ub_i(t-1)
    \!+\! \lr(\X(t)-\hat\X(t))\vb_i(t-1)w_i^{(\z)}(t-\Nper),\\
    \vb_i(t)\!&\leftarrow\!\vb_i(t-1)
    \!+\! \lr(\X(t)-\hat\X(t))^T\ub_i(t-1)w_i^{(\z)}(t-\Nper).
\end{align}
The updated $\ub_i(t)$ and $\vb_i(t)$
are projected such that
\begin{align}
    \ub_i(t)\leftarrow\max(0, \ub_i(t));
    \vb_i(t)\leftarrow\max(0, \vb_i(t))
\end{align}
to keep the parameters non-negative.
After all, the algorithm normalizes $\ub_i(t)$ and $\vb_i(t)$
by dividing by their norms $\|\ub_i(t)\|$ and $\|\vb_i(t)\|$,
respectively:
\begin{align}
    \ub_i(t)&\leftarrow\ub_i(t) / \|\ub_i(t)\|;~~
    \vb_i(t) \leftarrow\vb_i(t) / \|\vb_i(t)\|,
\end{align}
and then $w_i^{(\z)}$ multiplied by the two norms
keeps the normalization constraint:
\begin{align}
    w_i^{(\z)} &\leftarrow
    w_i^{(\z)}(t-\Nper)\cdot\|\ub_i(t)\|\cdot\|\vb_i(t)\|,
\end{align}
which captures the seasonal adjustments for the $i$-th component.

%

\begin{algorithm}[t]
    \small
    \caption{\method}
    \label{alg:ssmf}
\begin{algorithmic}[1]
    \REQUIRE
        (a) New observation $\X(t)$ and 
        previous factors $\U(t-1),\!\V(t-1),\!\Wfull(t-1)$, 
    \ENSURE
        Updated factors: $\U(t),\V(t),\Wfull(t)$
    \STATE \textcolor{blue}{/* Update the queue to keep the most recent season */}
    \STATE $\Queue\leftarrow\{\X(t-\Nper+1),\dots,\X(t)\}$;
    \STATE \textcolor{blue}{/* (1) RegimeSelection */}
    \STATE
        $\W_{RS}\leftarrow\argmin_{\W'\in\{\Wi{1},\dots,\Wi{\Nregime}\}}
            \CostT{\Queue}{\U(t-1),\V(t-1),\W'}$;
    \STATE \textcolor{blue}{/* (2) RegimeExtraction */}
    \STATE 
        $\W_{RE}\leftarrow\argmin_{\W'}
            \CostC{\Queue}{\U(t-1),\V(t-1),\W'}$;
    \STATE
        $C_{RS}=\CostT{\Queue}{\U(t-1),\V(t-1),\W_{RS}}$;
        $C_{RE}=\CostT{\Queue}{\U(t-1),\V(t-1),\W_{RE}}$;
    \IF {$C_{RS}$ \textbf{is less than} $C_{RE}$}
        \STATE $\Wi{z}=\textrm{diag}(\W_{RS}(t))$;
    \ELSE
        \STATE $\Wi{z}=\textrm{diag}(\W_{RE}(t))$;~
               $\Wfull(t)=\Wfull(t-1)\cup\W_{RE}$;~
               $\Nregime=\Nregime+1$;
    \ENDIF
    \STATE \textcolor{blue}{/* Perform gradient updates */}
    \STATE $\hat{\X}(t)\leftarrow \U(t-1)\W^{(\z)}(t-\Nper)\V(t-1)^T$;
    \STATE $\U(t)\leftarrow\U(t-1)
            +\lr(\X(t) - \hat{\X}(t))\V(t-1)\Wi{z}(t-\Nper)$;
    \STATE $\V(t)\leftarrow\V(t-1)
            +\lr(\X(t) - \hat{\X}(t))^T\U(t-1)\Wi{z}(t-\Nper)$;

    \FOR {$i=1:\Nstate$}
        \STATE \textcolor{blue}{/* Project onto non-negative constraint */}
        \STATE
            $\ub_i(t)\leftarrow\max(0, \ub_i(t))$;~~
            $\vb_i(t)\leftarrow\max(0, \vb_i(t))$
        \STATE \textcolor{blue}{/* Renormalize all the factors */}
        \STATE
            $w_i^{(z)}\leftarrow
             w_i^{(z)}(t-\Nper)\cdot\|\ub_i(t)\|\cdot\|\vb_i(t)\|$;
        \STATE
            $\ub_i(t)\leftarrow\ub_i(t) / \|\ub_i(t)\|$;~~
            $\vb_i(t) \leftarrow\vb_i(t) / \|\vb_i(t)\|$;
    \ENDFOR
    \RETURN $\U(t),\V(t),\Wfull(t)$
\end{algorithmic}
\normalsize
\end{algorithm}

\subsection{Forecasting}

To forecast $t$-steps ahead values
for a given $t > r$,
we use the latest factors,
$\U(\Npoint)$, $\V(\Npoint)$, and $\W_{t_s}^{(z)}(r)$, where
$t_s$ is the most recent season observed before time $\Npoint$,
which corresponds to the season at time $t$.
Namely, $t_s = \Npoint - (\Npoint - t \mod \Nper)$.
By choosing the best regime $z$,
which gives the minimum description cost
in the last one season,
the predicted matrix for time $t$ is thus computed as follows.
\begin{align}
    \Xhat(t)=\U(r)\W_{t_s}^{(i)}(r)\V(r).
\end{align}

\subsection{Theoretical Analysis}

We show the time and memory complexity of \method.
The key advantage is that both of the complexities
are constant with regard to a given tensor length, even if it evolves without bound.

\begin{lemma}\label{lem:time}
    The time complexity of \method is
    $O(\Nper\Nregime\Nstate\Nrow\Ncol + \Nper\Nstate^2(\Nrow+\Ncol))$
    per time point.
\end{lemma}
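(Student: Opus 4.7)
The plan is to walk through Algorithm~\ref{alg:ssmf} line by line, bound the per-iteration cost of each step, and argue that only RegimeSelection and RegimeExtraction contribute the two terms in the statement, while the remaining updates are strictly dominated. Throughout, the key structural facts I would exploit are that each $\Wi{j}$ is diagonal (so multiplication by it costs $O(\Nstate(\Nrow+\Ncol))$ rather than $O(\Nstate^2(\Nrow+\Ncol))$), that the queue $\Queue$ holds exactly $\Nper$ matrices of shape $\Nrow\times\Ncol$, and that the tensor $\Wfull$ has exactly $\Nregime$ slices.

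First I would bound RegimeSelection (line 4), which produces the $\Nper\Nregime\Nstate\Nrow\Ncol$ term. For each of the $\Nregime$ candidate regimes and each of the $\Nper$ matrices in $\Queue$, we must form the reconstruction $\U\,\Wi{j}_i\,\V^T$ and accumulate its contribution to $\CostT{\Queue}{\U,\V,\Wi{j}}$. Since $\Wi{j}_i$ is diagonal, computing $\U\Wi{j}_i$ is $O(\Nrow\Nstate)$; the subsequent product by $\V^T$ is $O(\Nrow\Ncol\Nstate)$; scoring the residual under the Gaussian/Huffman model (as in the data encoding cost expression) is $O(\Nrow\Ncol)$; and the model description cost depends only on the nonzero counts already maintained incrementally. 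Summing over the $\Nper\Nregime$ (season, regime) pairs yields exactly $O(\Nper\Nregime\Nstate\Nrow\Ncol)$.

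Next I would analyse RegimeExtraction (line 6), where the $\Nper\Nstate^2(\Nrow+\Ncol)$ term arises. Holding $\U(t-1)$ and $\V(t-1)$ fixed, the $\argmin$ decouples across the $\Nper$ seasons of $\Queue$ into independent problems $\min_{\ws}\|\X^{(i)}-\U\,\mathrm{diag}(\ws)\,\V^T\|_F^2$, whose normal equations read $(\U^T\U \odot \V^T\V)\,\ws = \mathrm{diag}(\U^T\X^{(i)}\V)$ by the Khatri-Rao structure of the parameterization. Forming the Gram matrices $\U^T\U$ and $\V^T\V$ costs $O(\Nstate^2(\Nrow+\Ncol))$ per season (or once, amortized, depending on the implementation). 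Assembling $\mathrm{diag}(\U^T\X^{(i)}\V)$ by computing each $\ub_j^T \X^{(i)} \vb_j$ individually costs $O(\Nrow\Ncol\Nstate)$, and solving the $\Nstate\times\Nstate$ system is $O(\Nstate^3)$. Treating the number of gradient/ALS passes as constant (consistent with the smooth-update philosophy of the algorithm) and summing over the $\Nper$ seasons yields $O(\Nper\Nrow\Ncol\Nstate + \Nper\Nstate^2(\Nrow+\Ncol))$, which matches the second term in the claim and absorbs a lower-order contribution into the first.

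Finally I would check that the remaining work is dominated. The per-observation gradient step on $\U(t),\V(t)$ (lines 13--16) requires $\hat\X(t)$, the residual, and two matrix products of $(\Nrow\times\Ncol)$ with $(\Ncol\times\Nstate)$ (and its transposed twin), each $O(\Nrow\Ncol\Nstate)$; the subsequent post-multiplication by the diagonal $\Wi{z}(t-\Nper)$ adds only $O(\Nstate(\Nrow+\Ncol))$. The non-negative projection (line 20) and the renormalization loop (lines 22--24) together cost $O(\Nstate(\Nrow+\Ncol))$. All of these are swallowed by the two dominant terms, and summing proves the bound. The main obstacle will be making the RegimeExtraction accounting airtight, because the pseudocode only specifies it as an inner $\argmin$ rather than fixing an explicit solver or iteration count; the argument must therefore appeal to (i) a constant number of inner iterations and (ii) exploiting diagonality of $\Wi{j}$ together with the Khatri-Rao decoupling so that the per-season linear system has size $\Nstate\times\Nstate$ rather than anything larger, which is precisely where the $\Nstate^2(\Nrow+\Ncol)$ factor (rather than, say, $\Nstate\Nrow\Ncol$ with extra multipliers) comes from.
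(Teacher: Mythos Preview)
Your accounting is sound and lands on the same bound, but the route you take for RegimeExtraction differs from the paper's. The paper's proof simply says that RegimeExtraction applies \emph{the same gradient descent update} used for the main factor refresh, once per matrix in $\Queue$, and asserts that a single such update costs $O(\Nstate\Nrow\Ncol+\Nstate^2(\Nrow+\Ncol))$; summing over the $\Nper$ matrices gives the second term. By contrast, you solve the inner $\argmin$ in closed form via the Khatri--Rao normal equations $(\U^T\U\odot\V^T\V)\,\ws=\mathrm{diag}(\U^T\X\V)$, and the $\Nstate^2(\Nrow+\Ncol)$ factor emerges for you from forming the Gram matrices $\U^T\U$ and $\V^T\V$. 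Your derivation is more explicit about where $\Nstate^2(\Nrow+\Ncol)$ actually comes from (the paper never spells this out for the gradient step), and it removes the need to assume a constant iteration count, which you correctly flag as the soft spot in any gradient-based argument. The trade-off is that the text accompanying the pseudocode explicitly says RegimeExtraction is carried out ``by applying the gradient descent method'' to the factors over $\Queue$, so your closed-form solver is an alternative implementation rather than the one the authors analyze. For RegimeSelection and the trailing projection/renormalization steps your analysis matches the paper's line for line.
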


\begin{proof}
    The algorithm first computes
    the model description cost
    between $\X(t)$ and $\hat\X(t)$ using $\Nregime$ regimes in
    $O(\Nregime\Nstate\Nrow\Ncol)$,
    which is repeated over $t=t-\Nper+1,\dots,\Npoint$.
    To obtain a new regime,
    it performs the gradient descent update to $\Queue$,
    which takes
    $O(\Nper(\Nstate\Nrow\Ncol+\Nstate^2(\Nrow+\Ncol)))$.
    Then, for a selected regimes,
    it performs the gradient descent update, which requires
    $O(\Nstate\Nrow\Ncol+\Nstate^2(\Nrow+\Ncol))$.
    The projection to non-negative values
    and normalization require $O(\Nrow+\Ncol)$ for $\Nstate$ loops,
    thus $O(\Nstate(\Nrow+\Ncol))$ in total.
    Overall, the time complexity of \method is
    $O(\Nper\Nregime\Nstate\Nrow\Ncol + \Nper\Nstate^2(\Nrow+\Ncol))$
    per time point.
\end{proof}

\begin{lemma}\label{lem:memory}
    The memory complexity of \method is
    $O(\Nstate(\Nrow + \Ncol + \Nregime\Nper) + |\Queue|)$
    per time point.
\end{lemma}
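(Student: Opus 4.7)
The plan is to perform a direct memory audit of Algorithm~\ref{alg:ssmf}: I would enumerate every object the algorithm must retain between iterations, sum their sizes, and then verify that the scratch space used within one iteration also fits into the claimed bound. The persistent structures are precisely those returned at the last line: the factor matrices $\U(t)$ of shape $(\Nrow\times\Nstate)$ and $\V(t)$ of shape $(\Ncol\times\Nstate)$ contribute $O(\Nrow\Nstate)$ and $O(\Ncol\Nstate)$ cells respectively, and the regime tensor $\Wfull(t)$ of shape $(\Nregime\times\Nper\times\Nstate)$ contributes $O(\Nregime\Nper\Nstate)$. The queue $\Queue$ of the most recent $\Nper$ observations is held explicitly, and by the same convention used to define the model description costs, its size is accounted for as $|\Queue|$. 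Summing these four contributions already matches $O(\Nstate(\Nrow+\Ncol+\Nregime\Nper)+|\Queue|)$.

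It then remains to bound the transient workspace inside a single call. The candidate matrices $\W_{RS}$ and $\W_{RE}$ produced by RegimeSelection and RegimeExtraction are each of size $O(\Nper\Nstate)$, which is absorbed into the $\Wfull$ term. The projection to non-negative values and the per-component renormalization loop can be performed in place on $\U(t)$ and $\V(t)$, so they require no additional storage. The gradient residual $\X(t)-\hat\X(t)$ can be formed and consumed entry by entry, and the inner-product structure of $\hat\X(t)=\U(t-1)\W^{(\z)}(t-\Nper)\V(t-1)^T$ means that each entry can be recomputed from $\U$, $\V$, and the current $\Wi{z}$ diagonal in $O(\Nstate)$ work without materializing the full product.

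The main obstacle is that a naive evaluation of $\hat\X(t)$ and of the MDL encoding costs over $\Queue$ appears to require $O(\Nrow\Ncol)$ temporary space, which is not explicitly present in the target bound. The crux of the argument is therefore to show that in the sparse regime only the entries of $\hat\X(t)$ colocated with observed nonzeros of $\X(t)$ or $\Queue$ are ever required (both for the Gaussian encoding cost $\CostC{\X}{\cdot}$ and for the gradient residual), so their storage can be charged against $|\Queue|$, while in the dense regime $|\Queue|$ is already of order $\Nper\Nrow\Ncol$ and hence trivially absorbs an $\Nrow\Ncol$-sized reconstruction. Once this streaming-evaluation argument is in place, the audit closes and the stated memory bound follows.
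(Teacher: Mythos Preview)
Your proposal is correct and follows the same enumeration as the paper: the paper's own proof simply lists $\U$, $\V$, $\Wfull$, and the sparse queue $\Queue$, sums their sizes, and stops. Your additional audit of transient workspace (the $\W_{RS}$, $\W_{RE}$ candidates, the in-place projection/normalization, and the entry-by-entry handling of the residual so that no dense $\Nrow\times\Ncol$ reconstruction need be materialized) is a genuine refinement the paper does not carry out; it closes a gap the original argument leaves implicit, at the cost of a slightly longer proof.
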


\begin{proof}
\method retains the two consecutive factors,
$\U$ and $\V$ with shape $(\Nrow\times\Nstate)$
and $(\Ncol\times\Nstate)$, respectively,
which requires $O(\Nstate(\Nrow + \Ncol))$.
For seasonal factors in $\Nregime$ regimes,
it maintains a tensor $\Wfull$ with shape
$(\Nregime\times\Nper\times\Nstate)$.
It also needs a space to retain a sparse tensor, $|\Queue|$,
which is the number of observed elements.
Therefore, the total memory space required for \method is
$O(\Nstate(\Nrow + \Ncol + \Nregime\Nper) + |\Queue|)$
per time point.
\end{proof}

\section{Experiments}
\label{sec:experiments}
\begin{table}[t]
    \centering
    \caption{Dataset Description}
    \label{table:dataset}
    \begin{tabular}{lrrrrrrr}
    \toprule
    Name & Rows & Columns & Duration
    & Sparsity & Frequency & $s$ & $r_{test}$\\
    \midrule
    \NYtaxi    & \NYtaxiNrow    & \NYtaxiNcol & \NYtaxiNtime
    & $98.3$\% & Hourly         & $168$ & $500$ \\
    \NYCbikeU  & \NYCbikeUNrow  & \NYCbikeUNcol & \NYCbikeUNtime
    & $96.2$\%   & Hourly         & $168$ & $500$ \\
    \Tycho     & \TychoNrow     & \TychoNcol & \TychoNtime
    & $92.4$\% & Weekly         & $52$ & $200$ \\
    \bottomrule
    \end{tabular}
\end{table}

In this section, we show how our proposed method
works on real tensor streams.
The experiments were designed to answer the following questions.
 
\begin{itemize}
    \item[\textbf{Q1.}] \textbf{Accuracy:}
    How accurately does it predict future events?  
    \item[\textbf{Q2.}] \textbf{Scalability:}
    How does it scale with input tensor length?
    \item[\textbf{Q3.}] \textbf{Effectiveness:}
    How well does it extract meaningful latent dynamics/components?
\end{itemize}
We implemented our algorithm in Python (ver. 3.7.4)
and all the experiments were conducted on an Intel Xeon W-$2123$
$3.6$GHz quad-core CPU with $128$GB of memory and running Linux.
Table \ref{table:dataset} shows the datasets
used to evaluate our method:

\mypara{\NYtaxi}\footnote{\url{https://www1.nyc.gov/site/tlc/about/tlc-trip-record-data.page}}
NYC Yellow Taxi trip records during the term from 2020-01-01 to 2020-06-30
in the pairs of $\NYtaxiNrow$ pick-up and drop-off locations.


\mypara{\NYCbikeU}\footnote{\url{https://s3.amazonaws.com/tripdata/index.html}}
NYC CitiBike ride trips from 2017-03-01 to 2021-03-01.
We only considered stations that had at-least $50$k rentals during the term, and extracted $409$ stations.
We then categorized users by age from $17$ to $80$
and obtained the stream of ride counts at the $409$ stations.

\mypara{\Tycho}\footnote{\url{https://www.tycho.pitt.edu/data/}}
A stream of the weekly number of infections
by $36$ kinds of diseases at the $57$ states/regions in the US,
which were collected from 1950-01-01 to 2014-07-27.


\subsection{Accuracy}
\label{subsec:accuracy}
We first describe a comparison in terms of
forecasting using Root Mean Square Error (RMSE).
Since \citep{hooi2019smf} outperforms several baselines
including CPHW and TSVDCWT,
we omit the baselines from our experiments.
Our baseline methods
include (1) matrix/tensor factorization approaches: \ncp \citep{xu2013block}:
a non-negative CP decomposition,
optimized by a block coordinate descent method
for sparse data.
(2) \smf \citep{hooi2019smf}:
an online algorithm for seasonal matrix factorization,
which manages a single seasonal factor.
(3) \trmf \citep{yu2016temporal}:
a method to take into account temporal and seasonal patterns
in matrix factorization.

For each method, the initialization process
was conducted on the first three seasons of an input tensor.
For \trmf,
we searched for the best three regularization coefficients,
$\lambda_I$, $\lambda_{AR}$, $\lambda_{Lag}$,
in $\lambda=\{0.0001,0.001,0.01,0.1,1,10\}$.
Auto-coefficients of \trmf are set for at time points
$r-1,\dots,r-\lceil\Nper/2\rceil$
as well as at three time points
$r -\Nper,r - (\Nper+1),r -(\Nper+2)$
to consider seasonality.
For \method and \smf, we determined a learning rate $\lr$
in $\lr=\{0.1,0.2,0.3,0.4\}$ by cross-validation in each training data.
The number of components $\Nstate$ was set to $15$ among all these methods
for a fair comparison.
After the initialization,
each algorithm takes the first $r_{train}$ time steps,
and forecasts the next $r_{test}$ steps,
as defined in \autoref{table:dataset}.
This procedure was repeated for every $r_{test}$ steps,
for example, $r_{train}=\{500, 1000, 1500, \dots\}$ in \NYtaxi dataset.


\autoref{fig:rmse} shows the results of RMSE
on all three datasets,
where \method consistently outperforms its baselines.
We observe that our method shows
a better improvement on (b) \NYCbikeU than (a) \NYtaxi,
whereas the two datasets have similar characteristics
because detecting regime shifts is more effective in forecasting
in a longer data duration.
\Tycho dataset contains repetitive regime shifts
caused by pandemics and developments of vaccines,
therefore, \method performs forecasting effectively.

\begin{figure}[t]
    \centering
    \includegraphics[width=0.85\columnwidth]{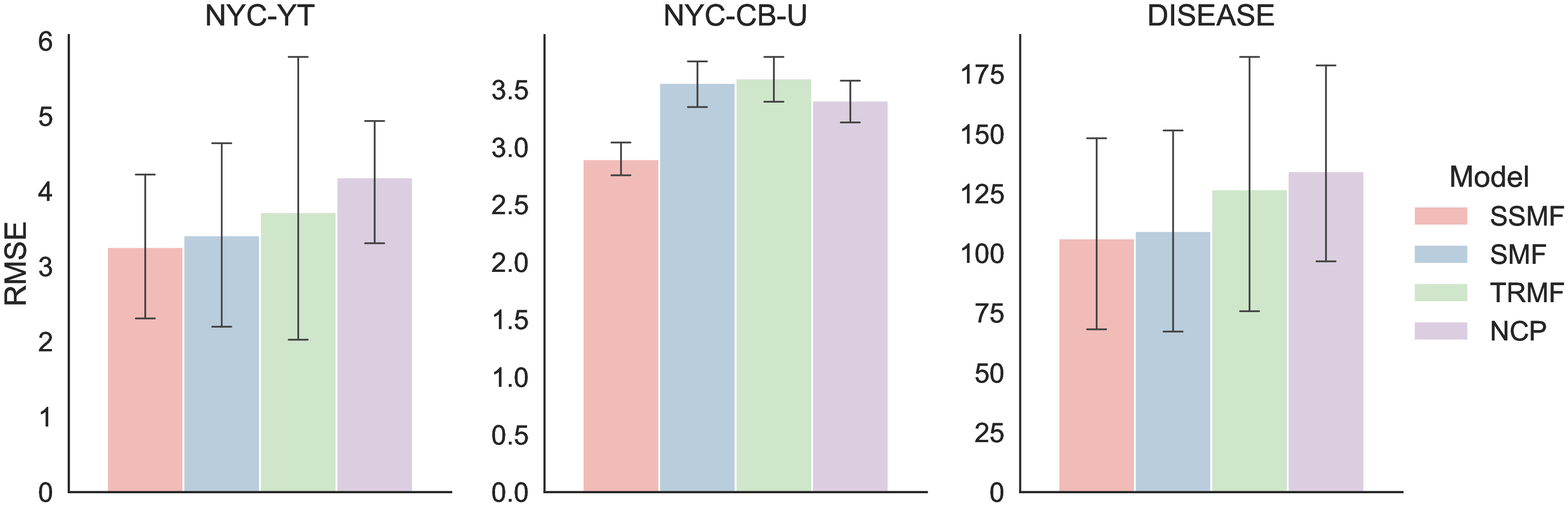}\\
    \caption{Forecasting performance in terms of RMSE:
        \method consistently outperforms the baselines thereby showing that regime shift detection adapts the model to recent patterns
        more quickly and effectively than the baselines.}
    \label{fig:rmse}
    \vspace{1em}
    \begin{tabular}{ccc}
        \includegraphics[width=0.3\columnwidth]{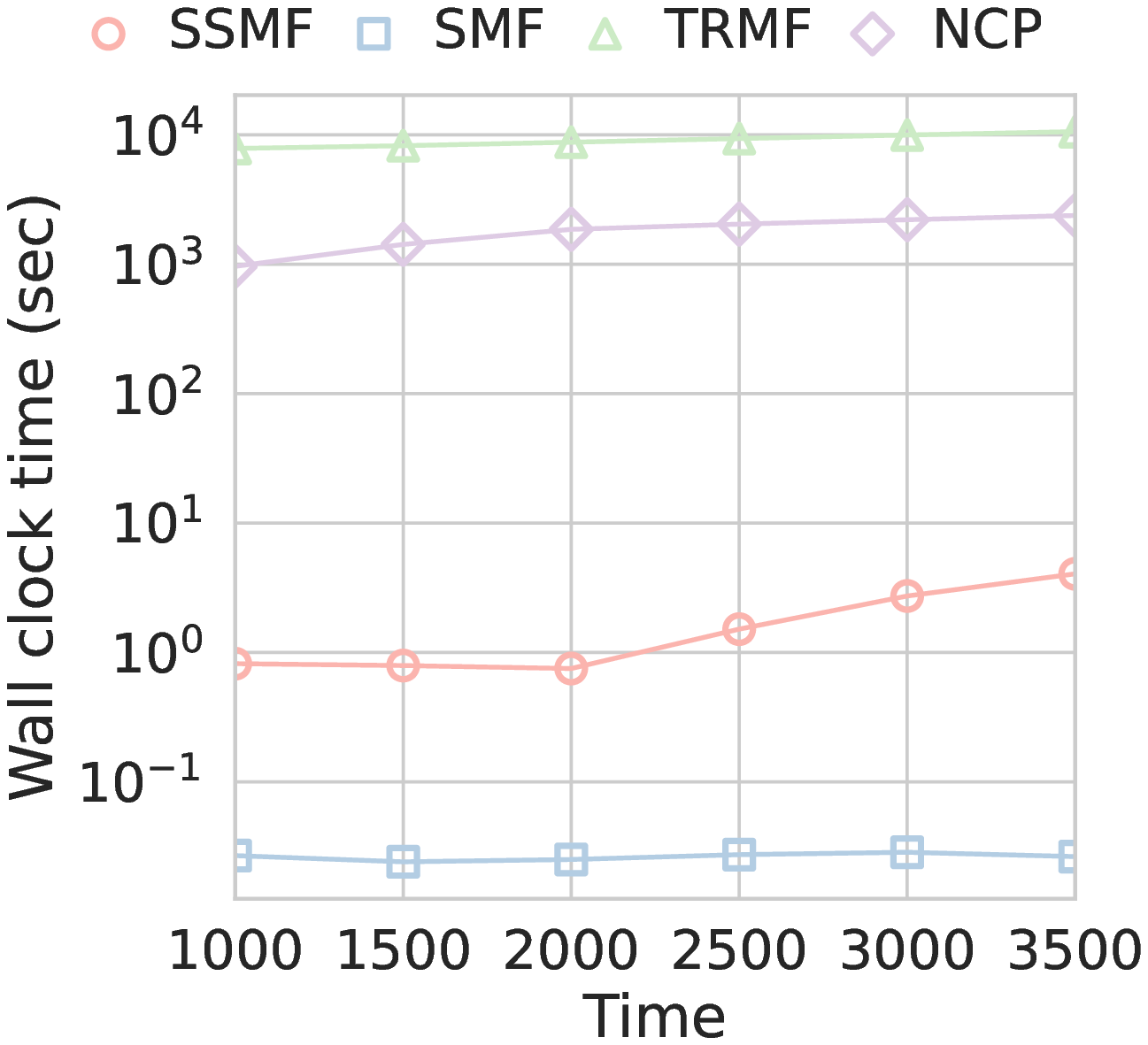}&
        \includegraphics[width=0.3\columnwidth]{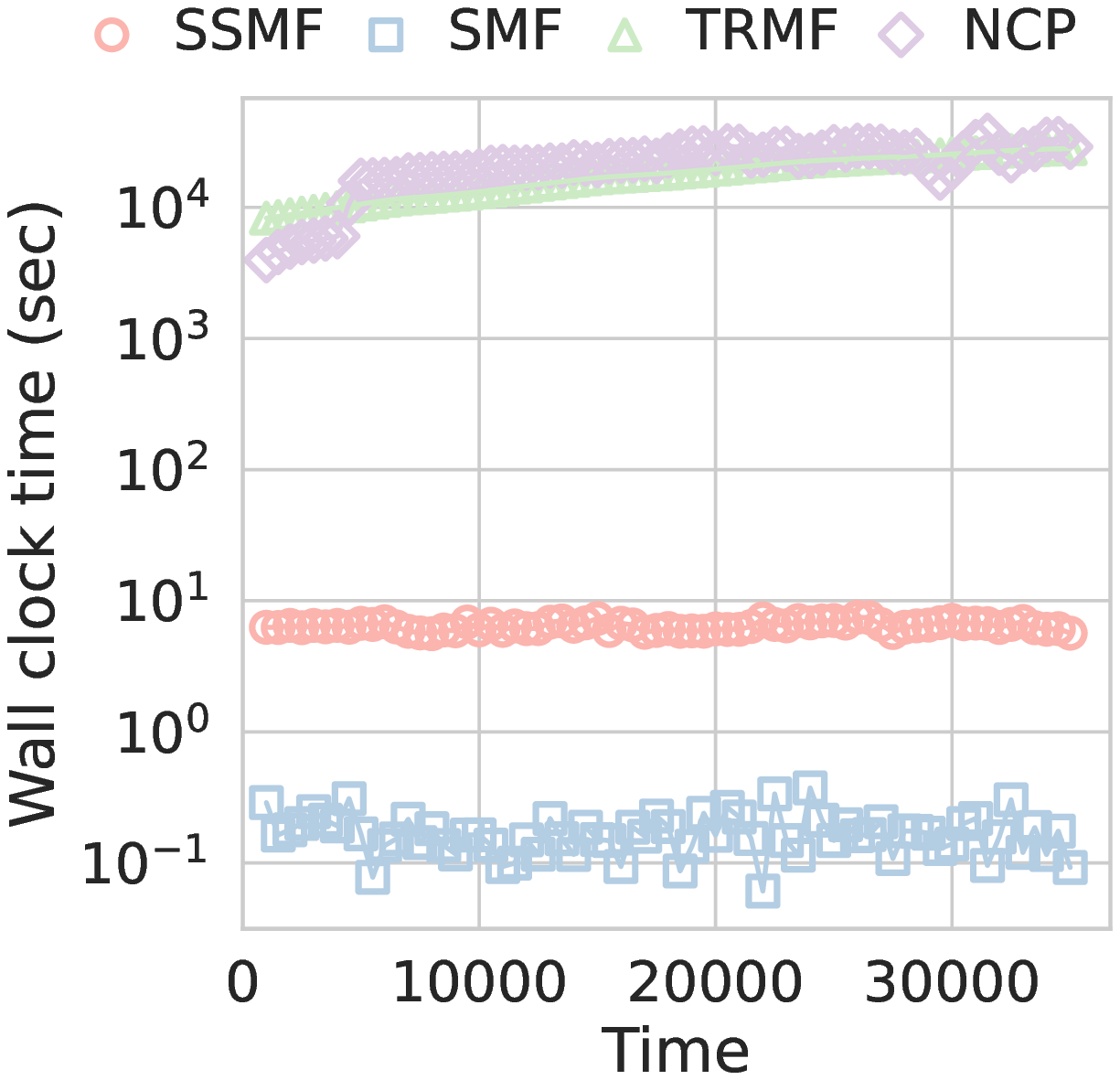}&
        \includegraphics[width=0.3\columnwidth]{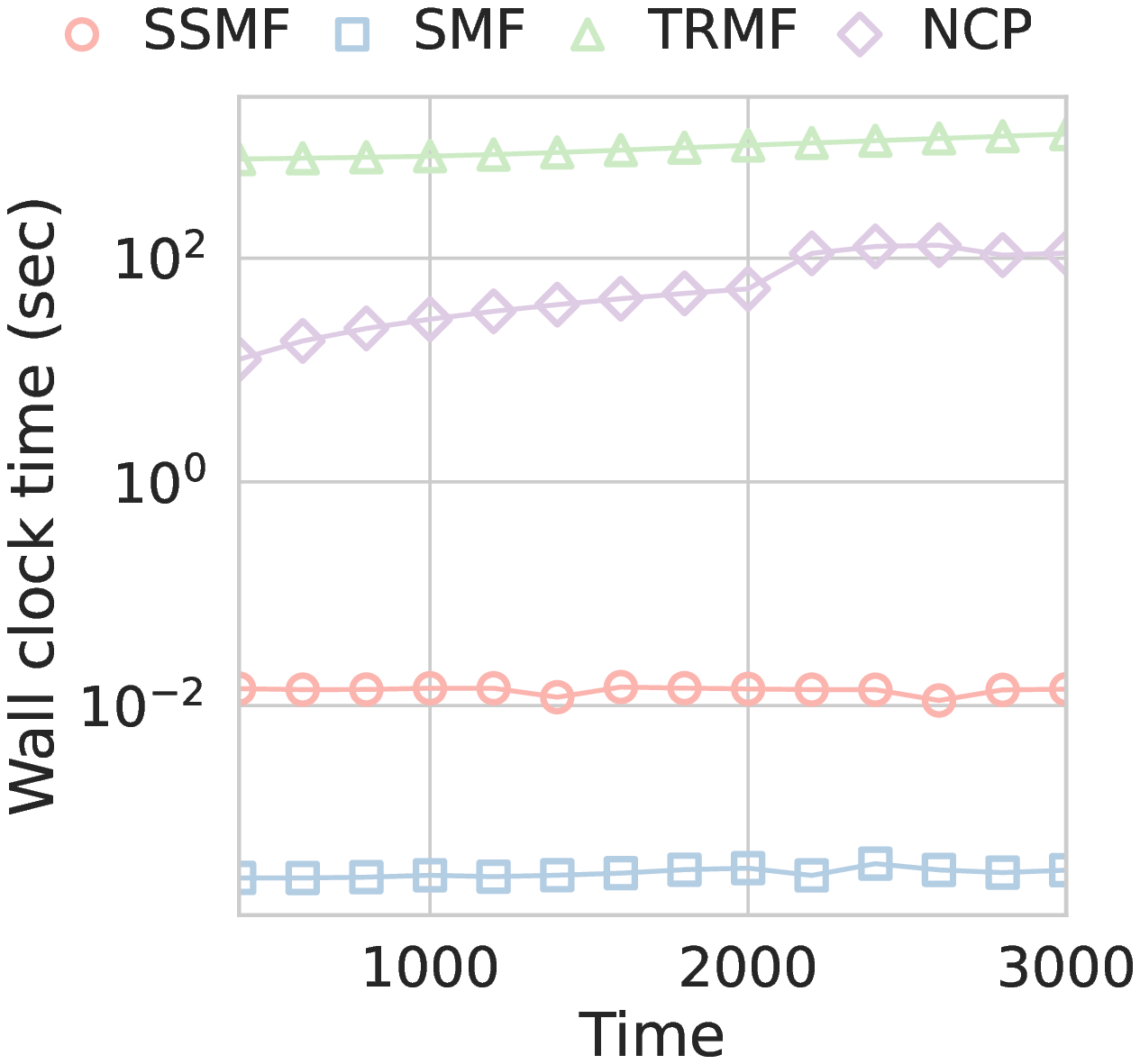}\\
        (a) \NYtaxi&
        (b) \NYCbikeU&
        (c) \Tycho
    \end{tabular}
    \caption{
        Wall clock time vs. data stream length:
        \method efficiently monitors data streams to find regimes.
    }
    \label{fig:walltime}
\end{figure}

\subsection{Scalability}
We discuss the performance of \method
in terms of computational time on the three datasets.
\autoref{fig:walltime} shows the wall clock time vs.
data stream length, where the experimental setting
is same as of \autoref{subsec:accuracy}.
The figures show that our proposed method
is scalable for mining and forecasting large sparse streams.
Although the scalability of \method
is linear along the number of regimes,
the growth rate of its running time
is greatly less than that of the other offline baselines.
This is because the proposed method can
summarize distinct seasonal patterns as regimes effectively.
\smf is the fastest algorithm but
it does not consider regime shifts in online processing.
As described in the accuracy comparison section,
our method is more effective when data streams
have significant pattern shifts,
which cannot be captured only with smooth updates
on a single, fixed component.

\begin{figure}[t]
    \centering
    \includegraphics[width=\columnwidth]{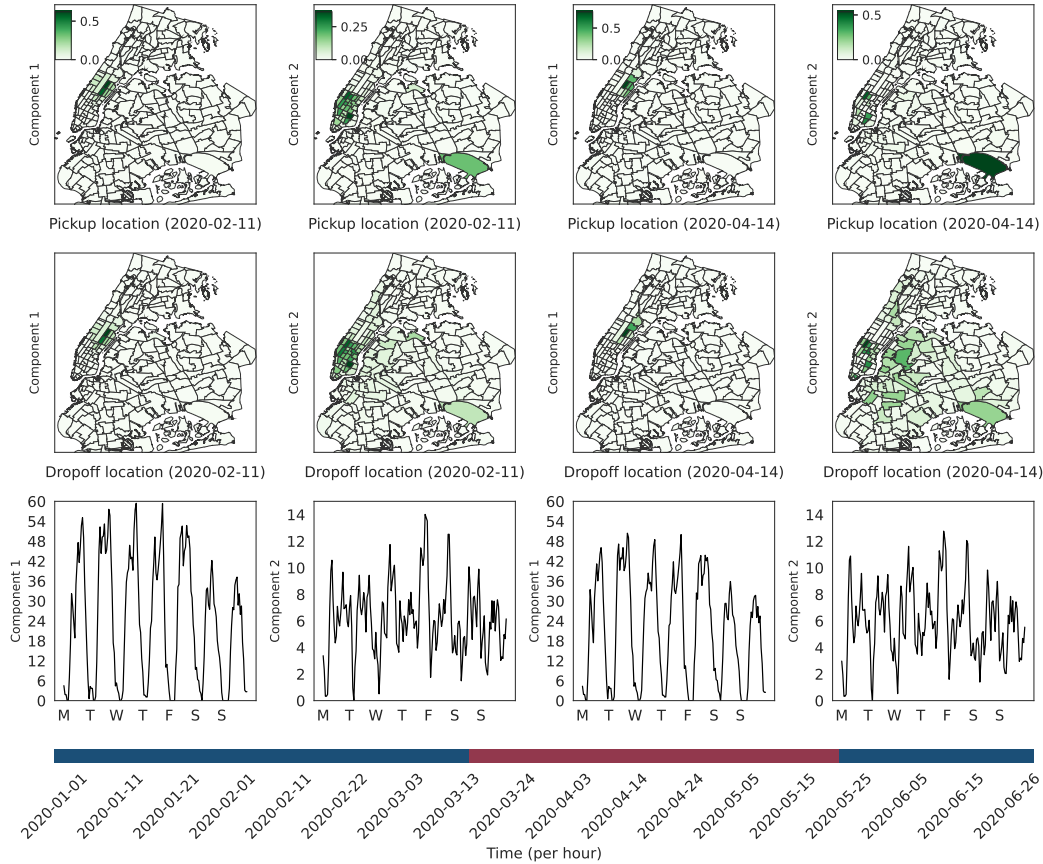}
    \caption{
        \method can automatically detect regime shifts caused by a pandemic
        based on multiple seasonal factors:
        Component 1,
        corresponding roughly to entertainment,
        decays in the pandemic,
        while component 2,
        representing trips between Manhattan and Kennedy Airport,
        experiences a shift toward trips
        from the airport to wide areas of New York City.
    }
    \label{fig:regimes}
\end{figure}

\subsection{Real-world Effectiveness}

In this section, we show an example of
the real-world effectiveness of our approach using the \NYtaxi dataset.
\autoref{fig:regimes} shows the overview of
the two representative regimes detected by \method,
where we selected two of the fifteen components to describe.
The bottom bar represents the timeline of regime detection.
\method kept using the initial regime during the blue term,
while it generated new regimes during the red term.
The three rows of plots are the visualization of $\W$, $\U$, $\V$, where the first two columns correspond to the blue regimes, and the next two to the red.

In mid March 2020, New York City's governor issued stay-at-home orders, and \method detected the event as the regime shifts
to adapt to the changed demands of taxi rides.
Separating regimes and learning individual patterns
help us to interpret the results as follows:
Component 1 of the blue regime shows a seasonal pattern that follows a weekday/weekend pattern.
According to the above two maps, 
more taxi rides are concentrated around Manhattan
before the pandemic.
In the red regime, however,
the strength of the seasonal component became
about $10$\% weaker than the one before.
Accordingly, the frequent pickup and dropoff became smaller.
This area has a large number of bars and restaurants,
suggesting that this component corresponds roughly to entertainment-related trips, whose demands are strongly affected by the pandemic.
Component 2, in contrast, is concentrated between Manhattan and Kennedy Airport, suggesting that this component captures trips to and fro the airport. Under the pandemic situation, this component shifted toward trips from the airport to destinations scattered around New York City. This observation could suggest that customers arrived at the airport,
and then used a taxi to avoid public transit. 
After this phase, our method switched to the first regime again, returning to the previous seasonal pattern. The demand for taxi rides began to recover, which roughly aligns with reopening in New York City, which began in early June.
Consequently, regime shifts allow us to distinguish
significant changes in seasonal behavior 
as well as contextual information between attributes.
In online learning, the mechanism also supports
for quick adaptation and updates,
which contributes to improving forecasting accuracy.

\section{Conclusion}
\label{sec:conclusion}

In this paper, we proposed Shifting Seasonal Matrix Factorization (\method) method, for effectively and efficiently forecasting future activities in data streams. The model employs multiple distinct seasonal factors to summarize temporal characteristics as regimes, in which the number of regimes is adaptively increased based on a lossless data encoding scheme.
We empirically demonstrate that on three real-world data streams,
our forecasting method \method has the following advantages over the baseline methods:
(a) By using the proposed switching mechanism in an online manner,
\method \textit{accurately} forecasts future activities.
(b) \textit{Scales} linearly by learning
matrix factorization in constant time and memory.
(c)  Helps understand data streams by \textit{effectively} extracting complex time-varying patterns.



\bibliographystyle{named}
\bibliography{ijcai21}

\end{document}